\documentclass{article}

\usepackage{microtype}
\usepackage{graphicx}
\usepackage{subcaption}
\usepackage{booktabs}
\usepackage{amsmath}
\usepackage{amssymb}
\usepackage{mathtools}
\usepackage{amsthm}
\usepackage{enumitem}
\usepackage[nobiblatex]{xurl}
\usepackage{xcolor}   
\usepackage{pifont}

\usepackage{hyperref}
\usepackage[capitalize,noabbrev]{cleveref}

\usepackage[preprint]{icml2026}

\makeatletter
\@ifundefined{algorithm}{}{}
\@ifundefined{endalgorithm}{}{}
\@ifundefined{listofalgorithms}{}{}

\expandafter\ifx\csname algorithm*\endcsname\relax\else
  \expandafter\let\csname algorithm*\endcsname\relax
\fi
\expandafter\ifx\csname endalgorithm*\endcsname\relax\else
  \expandafter\let\csname endalgorithm*\endcsname\relax
\fi
\makeatother

\usepackage[ruled,vlined,linesnumbered]{algorithm2e}

\theoremstyle{plain}
\newtheorem{theorem}{Theorem}[section]

\theoremstyle{definition}
\newtheorem{definition}[theorem]{Definition}

\theoremstyle{remark}

\usepackage[textsize=tiny]{todonotes}

\icmltitlerunning{Differentially Private Multimodal In-Context Learning}

\begin{document}
\twocolumn[
  \icmltitle{ Differentially Private Multimodal In-Context Learning}



  \icmlsetsymbol{equal}{*}
\begin{icmlauthorlist}
  \icmlauthor{Ivoline C. Ngong}{uvm}
  \icmlauthor{Zarreen Reza}{ind}
  \icmlauthor{Joseph P. Near}{uvm}
\end{icmlauthorlist}

\icmlaffiliation{uvm}{University of Vermont, Burlington, VT, USA}
\icmlaffiliation{ind}{Independent Researcher}

\icmlcorrespondingauthor{Ivoline C. Ngong}{ivoline.ngong@uvm.edu}



  \icmlkeywords{Machine Learning, ICML}

  \vskip 0.3in
]



\printAffiliationsAndNotice{}  

\begin{abstract}
Vision-language models are increasingly applied to sensitive domains such as medical imaging and personal photographs, yet existing differentially private methods for in-context learning are limited to few-shot, text-only settings because privacy cost scales with the number of tokens processed. We present Differentially Private Multimodal Task Vectors (DP-MTV), the first framework enabling many-shot multimodal in-context learning with formal $(\varepsilon, \delta)$-differential privacy by aggregating hundreds of demonstrations into compact task vectors in activation space. DP-MTV partitions private data into disjoint chunks, applies per-layer clipping to bound sensitivity, and adds calibrated noise to the aggregate, requiring only a single noise addition that enables unlimited inference queries. We evaluate on eight benchmarks across three VLM architectures, supporting deployment with or without auxiliary data. At $\varepsilon=1.0$, DP-MTV achieves 50\% on VizWiz compared to 55\% non-private and 35\% zero-shot, preserving most of the gain from in-context learning under meaningful privacy constraints. 
\end{abstract}

\section{Introduction}

In-context learning (ICL) enables vision-language models to adapt to new tasks by conditioning on demonstration examples at inference time, without finetuning~\citep{brown2020language, dong2022survey}. This allows organizations to customize models using their own data directly. However, when that data contains private information, ICL poses serious 
privacy risks. Models can memorize and leak content from demonstrations 
through attacks such as membership inference, data extraction, and 
prompt leaking~\cite{wen2024membership, duan2024privacy, hou2025neighborhood}

Consider a tax preparation service using a VLM to process client documents. Each image contains Social Security numbers, addresses, and income data that VLMs can extract through OCR~\citep{liu2024ocrbench}. An adversary with black-box access could query for this information directly or perform membership inference to determine whether a specific client's data was used. Images also leak information through visual content as models can infer geolocation from scene details~\citep{mendes2024granular}, extract private attributes from environmental cues ~\citep{tomekcce2024private}, and
reveal biometric characteristics despite anonymization ~\citep{caldarella2024phantom, kim2025safe}.



Differential privacy (DP) provides formal guarantees that limit what an adversary can infer about any individual in the
dataset~\citep{dwork2006calibrating}. Recent work has applied DP to text-based in-context learning by generating private synthetic demonstrations~\citep{panda2023differentially,
tang2023privacy}, aggregating outputs across disjoint exemplar sets ~\citep{wu2023privacy, duan2024privacy}, applying local differential privacy to labels ~\citep{zheng2024locally}, leveraging auxiliary public data ~\citep{joo2025public}, and mixing few-shot with zero-shot outputs ~\citep{flemings2025differentially}. Despite their differences, all these methods work only with text and are restricted to few-shot settings. This restriction arises because privacy cost accumulates with either the number of demonstrations or the tokens generated, and the noise required for strong privacy at scale destroys utility. For multimodal data, where a single image consumes hundreds of tokens~\citep{bai2023qwen}, even a handful of demonstrations exhaust both context capacity and privacy budget. To our knowledge, no prior work addresses differentially private in-context learning for multimodal data.

One way to scale beyond context limits is Multimodal Task 
Vectors (MTV)~\citep{huang2024multimodal}, which aggregates activation patterns from hundreds of examples into compact steering vectors applied at inference time. This bypasses context limits entirely, but  MTV provides no privacy guarantees. The vectors directly encode patterns from demonstrations, exposing them to potential inference attacks. Practitioners must therefore choose between privacy risks with many-shot learning or few-shot protection that does not extend to images.

We introduce Differentially Private Multimodal Task Vectors (DP-MTV), the first framework enabling many-shot multimodal in-context learning with formal $(\varepsilon, \delta)$-differential privacy. Operating in activation space fundamentally changes the privacy problem. Instead of protecting each token or demonstration individually, we aggregate activation patterns first and privatize the aggregate. DP-MTV partitions private data into disjoint chunks where each individual's data appears exactly once, clips each chunk's contribution to bound sensitivity, and adds calibrated Gaussian noise to the mean. This requires only a single noise addition regardless of dataset size, enabling unlimited inference queries at zero additional privacy cost.

We evaluate DP-MTV on eight benchmarks spanning visual question answering and fine-grained classification using three VLM architectures. At $\varepsilon=1.0$, DP-MTV achieves 49\% accuracy on VizWiz compared to 55\% for non-private MTV and 35\% for zero-shot, preserving most of the gain from in-context learning under meaningful privacy constraints. We provide both a variant that uses public auxiliary data for efficiency and a fully-private variant requiring no auxiliary data.

\paragraph{Contributions.}
\begin{itemize}[leftmargin=*, itemsep=2pt, topsep=2pt]
\item We introduce DP-MTV, the first method for differentially
private many-shot multimodal in-context learning, enabling formal $(\varepsilon, \delta)$-DP guarantees for learning from hundreds of image-text demonstrations.

\item We show that operating in activation space with disjoint partitioning and per-layer clipping requires only a single noise addition, enabling unlimited inference queries at zero marginal
privacy cost.

\item We evaluate on eight benchmarks across three VLM architectures, demonstrating that formal privacy guarantees are achievable without sacrificing the core benefit of learning from many examples.
\end{itemize}


\section{Preliminaries}

\subsection{Differential Privacy}

Differential privacy(DP)~\citep{dwork2006calibrating, dwork2014algorithmic} provides formal guarantees that an algorithm's output reveals limited information about any individual input record.

\begin{definition}[$(\varepsilon, \delta)$-Differential Privacy]
\label{def:dp}
A randomized mechanism $\mathcal{M}$ satisfies $(\varepsilon, \delta)$-differential privacy if for any datasets $\mathcal{D}, \mathcal{D}'$ differing in a single record, and any measurable set $\mathcal{O}$:
\begin{equation}
    \Pr[\mathcal{M}(\mathcal{D}) \in \mathcal{O}] \leq e^{\varepsilon} \cdot \Pr[\mathcal{M}(\mathcal{D}') \in \mathcal{O}] + \delta
\end{equation}
\end{definition}

The privacy budget $\varepsilon$ controls the strength of the guarantee: smaller values provide stronger protection. The parameter $\delta$ represents the probability of a privacy breach and is typically set to be cryptographically small (e.g., $\delta < 1/|\mathcal{D}|$).

We rely on three standard results. The \emph{Gaussian mechanism}~\citep{dwork2014algorithmic} achieves $(\varepsilon, \delta)$-DP by adding noise calibrated to a query's sensitivity: for $f: \mathcal{D} \rightarrow \mathbb{R}^d$ with $\ell_2$-sensitivity $\Delta_2 = \max_{\mathcal{D}, \mathcal{D}'} \|f(\mathcal{D}) - f(\mathcal{D}')\|_2$, releasing $f(\mathcal{D}) + \mathcal{N}(0, \sigma^2 I)$ with $\sigma \geq \Delta_2 \sqrt{2 \ln(1.25/\delta)} / \varepsilon$ satisfies $(\varepsilon, \delta)$-DP. Our implementation uses the analytic Gaussian mechanism~\cite{balle2018improving} which provides tighter bounds on privacy loss. The \emph{composition theorem} states that releasing the outputs of an $(\varepsilon_1, \delta_1)$-DP mechanism and an $(\varepsilon_2, \delta_2)$-DP mechanism together satisfies $(\varepsilon_1 + \varepsilon_2, \delta_1 + \delta_2)$-DP. The \emph{post-processing property} guarantees that any computation on the output of an $(\varepsilon, \delta)$-DP mechanism remains $(\varepsilon, \delta)$-DP.

\subsection{Multimodal Task Vectors}

Multimodal Task Vectors (MTV)~\citep{huang2024multimodal} enable many-shot in-context learning for vision-language models (VLMs) by encoding task knowledge into attention head activations rather than token sequences. Consider a VLM with $L$ transformer layers, each containing $H$ attention heads of dimension $d$. MTV operates in three steps.

First, MTV processes multiple batches of in-context demonstrations through the VLM. For each forward pass, it extracts the attention head activations at the final token position, yielding a tensor in $\mathbb{R}^{L \times H \times d}$. These activations are averaged across all forward passes to produce the mean activation tensor $\bar{a} \in \mathbb{R}^{L \times H \times d}$. Second, MTV uses REINFORCE~\citep{williams1992simple} to learn a binary mask $\mathbf{m} \in \{0,1\}^{L \times H}$ identifying which attention heads benefit from activation injection. Third, at inference time, for each selected head where $\mathbf{m}_{l,h} = 1$, the model's original activation is replaced with the corresponding component of $\bar{a}$, steering behavior toward the demonstrated task.

MTV's key property is that it aggregates information from hundreds of examples into a compact representation. This enables many-shot learning that would otherwise exceed context window limits, achieving substantial accuracy gains on visual question answering and image classification benchmarks.

\subsection{Privacy in In-Context Learning}
In-context learning poses inherent privacy risks. When demonstrations contain sensitive data, models exhibit higher confidence on those examples, enabling membership inference attacks that determine whether specific records were used~\citep{wen2024membership, duan2024privacy}. 
These risks extend to multimodal settings, where images may reveal sensitive attributes beyond their apparent content~\cite{mendes2024granular, tomekcce2024private, caldarella2024phantom}

Several frameworks have introduced differential privacy to ICL. \citet{tang2023privacy} generate DP synthetic demonstrations via noisy token-level aggregation. \citet{wu2023privacypreserving} and \citet{duan2024privacy} aggregate predictions across disjoint exemplar sets using PATE-style voting. \citet{zheng2024locally} apply local DP 
to labels, while \citet{joo2025public} leverage public auxiliary data to reduce privacy costs.

These methods share two limitations. First, they operate in token space, where privacy cost scales with sequence length. A single image  corresponds to hundreds of visual tokens~\citep{bai2023qwenvl}, making token-level protection prohibitively expensive for multimodal ICL.  Second, all existing methods are restricted to few-shot settings due to context window constraints. Our work addresses both limitations by privatizing in activation space, where the cost depends on the number of DP mechanisms applied rather than tokens processed.

\label{sec:preliminaries}

\section{Differentially Private Multimodal Task Vectors}


\begin{figure*}[t]
    \centering
    \includegraphics[width=0.75\textwidth]{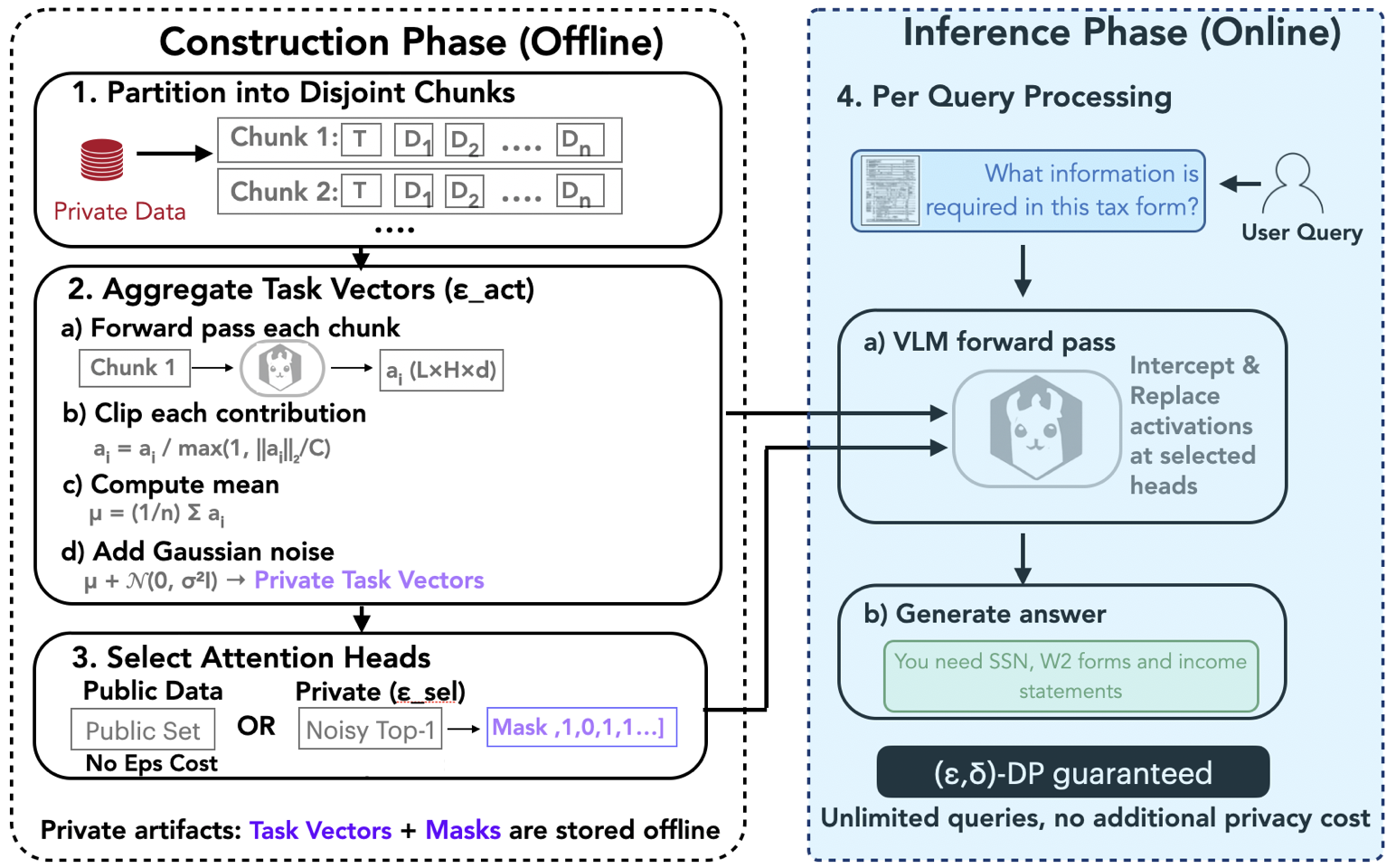}
    \caption{DP-MTV approach. \textbf{Construction (offline):} Partition data into disjoint chunks, extract and clip activations, compute the mean, add Gaussian noise (steps 1--2), then select heads via public data or a private mechanism (step 3). \textbf{Inference (online):} Replace activations at selected heads with private task vectors. Post-processing enables unlimited queries at no additional privacy cost.}
    \label{fig:dp_mtv_approach}
\end{figure*}

\subsection{Problem Setup}
\label{sec:problem_setup}

We consider a vision-language model with $L$ transformer layers, each containing $H$ attention heads of dimension $d$. Given a private dataset $\mathcal{D}_{\text{priv}}$ of $N$ image-question-answer triplets $(x, q, a)$, the goal is to enable many-shot in-context learning from $\mathcal{D}_{\text{priv}}$ while satisfying $(\varepsilon, \delta)$-differential privacy. Standard DP composition pays privacy cost for each training example accessed and each inference query served. With hundreds of examples and unlimited queries, this accumulation exhausts any finite privacy budget.

\paragraph{Threat Model.} We protect each example in $\mathcal{D}_{\text{priv}}$ against adversaries who observe model outputs and attempt to infer whether specific examples were used in training (membership inference) or extract sensitive attributes from them. The adversary can query the model arbitrarily but has no access to the training procedure or model internals.

\subsection{Overview}

\label{sec:overview}

Figure~\ref{fig:dp_mtv_approach} illustrates our approach. MTV separates construction from inference, suggesting a natural privacy strategy: privatize the mean activations and head masks once during construction, then reuse them for unlimited queries without additional privacy cost. Two technical challenges emerge: computing private mean activations without paying cost for each training example, and selecting heads privately when no public data exists.

Standard approaches pay privacy cost for each example accessed, and with hundreds of training examples, even modest budgets like $\varepsilon=3.0$ would add so much noise that the model cannot learn from the data. Head selection poses a second challenge, since evaluating which heads are task-relevant requires labeled examples that also come from private data when no public alternative exists.

Our method addresses both challenges through a two-phase approach (Figure~\ref{fig:dp_mtv_approach}). The \textbf{construction phase (offline)} partitions data into disjoint chunks so each example appears once, clips each chunk's activations to bounded norm, computes the mean, and adds calibrated noise to protect privacy. Head selection either runs on public data at zero privacy cost or uses a private selection mechanism when no public data exists. The \textbf{inference phase (online)} processes queries by replacing activations at selected heads with the private mean, requiring no additional privacy cost.

\subsection{Construction Phase}
\subsubsection{Private Mean Activations}
Computing private mean activations requires balancing two competing goals: tight sensitivity bounds that minimize noise, and preserving enough signal for effective learning. We achieve this balance through disjoint partitioning combined with per-layer clipping.

We partition $\mathcal{D}_{\text{priv}}$ into $m$ disjoint chunks where each example appears exactly once. Each chunk contains one target example plus $K$ demonstration examples. For VQA tasks, demonstrations are image-question-answer triplets that provide context. For classification tasks, demonstrations are image-label pairs showing examples of the relevant classes. Each chunk undergoes a forward pass through the VLM to extract activation patterns at selected layers $\mathcal{S} \subseteq [L]$, producing activation tensor $a_i \in \mathbb{R}^{|\mathcal{S}| \times H \times d}$.

Before aggregation, we clip activations independently for each layer. For layer $l \in \mathcal{S}$, the layer-level activation $a_i^{(l)} \in \mathbb{R}^{H \times d}$ is clipped to norm at most $C$:
\begin{equation}
\tilde{a}_i^{(l)} = a_i^{(l)} / \max(1, \|a_i^{(l)}\|_2 / C)
\end{equation}

After clipping, we compute the mean 
$\bar{a} = \frac{1}{m} \sum_{i=1}^m \tilde{a}_i$.
Since each example appears in exactly one chunk and clipping bounds each chunk’s contribution, changing one example affects only one clipped tensor. 
The $\ell_2$-sensitivity is
\begin{equation}
\Delta_2 = \frac{\sqrt{|\mathcal{S}|} \cdot C}{m}.
\end{equation}
The $\sqrt{|\mathcal{S}|}$ factor arises because clipping is applied independently per layer, and these contributions compose in $\ell_2$ norm. 
With $|\mathcal{S}| = 32$, this yields $\Delta_2 \approx \frac{5.66C}{m}$.

We add Gaussian noise calibrated to this sensitivity using the analytic Gaussian mechanism~\cite{balle2018improving}.
We release:
\begin{equation}
\bar{a}^{\text{priv}} = \bar{a} + \mathcal{N}(0, \sigma^2 I).
\end{equation}
This satisfies $(\varepsilon_{\text{tv}}, \delta)$-differential privacy
(Theorem~\ref{thm:task_vector_privacy}).

\begin{algorithm}[t]
\caption{Private Mean Activation Computation}
\label{alg:private_activations}
\SetKwComment{Comment}{$\rhd$\ }{}

\KwIn{$\mathcal{D}_{\text{priv}}$ (private dataset), $F$ (VLM), $\mathcal{S}$ (selected layers), $K$ (demonstrations per chunk), $C$ (clipping threshold), $\varepsilon_{\text{tv}}, \delta$ (privacy parameters)}
\KwOut{$\bar{a}^{\text{priv}} \in \mathbb{R}^{|\mathcal{S}| \times H \times d}$ (private mean activations)}

Partition $\mathcal{D}_{\text{priv}}$ into $m$ disjoint chunks $\mathcal{C}_i = \{\text{target}_i, \text{demo}_{i,1}, \ldots, \text{demo}_{i,K}\}$ \Comment*[r]{Each example in exactly one chunk}

\For{$i \gets 1$ \KwTo $m$}{
    $a_i \gets \text{Extract}(F, \mathcal{C}_i, \mathcal{S})$ \Comment*[r]{Forward pass chunk $\mathcal{C}_i$}
    \For{$l \in \mathcal{S}$}{
        $\tilde{a}_i^{(l)} \gets a_i^{(l)} \cdot \min\left(1, \frac{C}{\|a_i^{(l)}\|_2}\right)$ \Comment*[r]{Per-layer clipping}
    }
}

$\bar{a} \gets \frac{1}{m} \sum_{i=1}^m \tilde{a}_i$ \Comment*[r]{Aggregate clipped activations}

$\Delta_2 \gets \frac{\sqrt{|\mathcal{S}|} \cdot C}{m}$, \quad $\sigma^2 \gets \text{AnalyticGaussian}(\Delta_2, \varepsilon_{\text{tv}}, \delta)$ \Comment*[r]{Sensitivity and noise scale}

$\bar{a}^{\text{priv}} \gets \bar{a} + \mathcal{N}(0, \sigma^2 I)$ \Comment*[r]{Add Gaussian noise}

\Return{$\bar{a}^{\text{priv}}$}
\end{algorithm}


\subsubsection{Attention Head Selection}
\label{sec:head_selection}

Mean activations must be injected into task-relevant attention heads. We use REINFORCE~\citep{williams1992simple} to learn which heads to modify by training Bernoulli distributions over attention heads and sampling binary masks that maximize task performance via policy gradients. We consider two scenarios based on data availability.

\paragraph{Public-Data Variant.} When public examples from a related distribution exist, we run REINFORCE entirely on this public data at zero privacy cost. The public data need not match $\mathcal{D}_{\text{priv}}$ exactly—similar task structure suffices. For medical VQA, public examples might come from different hospitals or imaging modalities. For document understanding, public forms from different domains provide adequate signal. This variant concentrates the full privacy budget $(\varepsilon_{\text{tv}}, \delta)$ on privatizing mean activations. The final system satisfies $(\varepsilon_{\text{tv}}, \delta)$-DP since head selection with post-processing preserves privacy (Theorem~\ref{thm:public_variant}).

\paragraph{Private-Only Variant.} When no public data exists, we privatize the selection of the final mask via noisy top-$k$ selection with limited domain~\citep{durfee2019practical}. We sample candidate masks from the learned Bernoulli distribution and retain the most frequently occurring ones. For each candidate, we compute an aggregate score by evaluating it on private validation examples, clipping per-example cross-entropy losses to a bounded range and summing them. We then apply the Gumbel mechanism~\citep{durfee2019practical}, adding Gumbel noise with scale $C_{\text{sel}}/\varepsilon_{\text{sel}}$ to each score and selecting the mask with the highest noisy score. Since we select from a limited domain of candidate masks rather than all $2^{|\mathcal{S}| \times H}$ possible masks, tight privacy bounds are achievable even with small $\varepsilon_{\text{sel}}$ values. The total privacy cost is $(\varepsilon_{\text{tv}} + \varepsilon_{\text{sel}}, \delta )$ by composition (Theorem~\ref{thm:private_variant}). Slightly tighter bounds could be obtained using a numerical approach~\cite{gopi2021numerical}, but the practical benefit would be small since we compose only two mechanisms.

\subsection{Inference Phase}
During inference, the model processes queries using only the private mean activations $\bar{a}^{\text{priv}}$ and head mask $\mathbf{m}$ computed during construction. For each query, the model performs a standard forward pass. At each selected attention head where $\mathbf{m}_{l,h} = 1$, we intercept the computed activation and replace it with the corresponding component of $\bar{a}^{\text{priv}}$ before continuing the forward pass. The model then generates an answer using these modified activations.

This inference procedure satisfies the post-processing property of differential privacy. Since the private artifacts $(\bar{a}^{\text{priv}}, \mathbf{m})$ were released with privacy guarantees during construction, any deterministic computation on these artifacts preserves those guarantees. The model can serve unlimited queries without accumulating additional privacy cost, making deployment practical for real-world applications where query volume is unpredictable.





\subsection{Privacy Analysis}
\label{sec:privacy_analysis}

We establish privacy guarantees for both deployment variants. Full proofs appear in Appendix~\ref{app:proofs}.

\begin{theorem}[Private Mean Activations]
\label{thm:task_vector_privacy}

Disjoint partitioning ensures each record affects exactly one chunk, and per-layer clipping bounds each chunk's contribution. The $\ell_2$-sensitivity is $\Delta_2 = \sqrt{|\mathcal{S}|} \cdot C / m$, which is $\sqrt{H}$ times smaller than per-head clipping, reducing noise by a factor of $\sqrt{H} \approx 5.66\times$ for models with $H=32$ heads.
\end{theorem}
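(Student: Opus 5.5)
We prove the sensitivity bound first, then the comparison with per-head clipping; the privacy guarantee then follows from the Gaussian mechanism.

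\textbf{Which chunk changes.} Fix neighboring datasets $\mathcal{D}_{\text{priv}}$ and $\mathcal{D}_{\text{priv}}'$ that differ in one record. We take the partition to be determined by record positions and not by data values, and we hold $m$ fixed. Then exactly one chunk index $j$ contains the differing record. For every $i \neq j$ the chunk $\mathcal{C}_i$ is identical in the two datasets. The extracted activation $a_i$ is a deterministic function of $\mathcal{C}_i$ through the fixed VLM $F$, so $\tilde a_i = \tilde a_i'$. Hence
\[
\bar a - \bar a' = \tfrac{1}{m}\bigl(\tilde a_j - \tilde a_j'\bigr).
\]

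\textbf{Bounding the changed chunk.} Because the layer blocks are orthogonal coordinates, the squared $\ell_2$ (Frobenius) norm splits over layers:
\[
\|\tilde a_j - \tilde a_j'\|_2^2 = \sum_{l\in\mathcal{S}} \|\tilde a_j^{(l)} - \tilde a_j'^{(l)}\|_2^2 .
\]
The clipping rule gives $\|\tilde a_j^{(l)}\|_2 \le C$ for every $l$. The per-layer term is therefore at most $(2C)^2$ under replace-one adjacency, or at most $C^2$ under add/remove adjacency, where the differing chunk's contribution is zero in one dataset.

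\textbf{The main obstacle.} The stated constant $\sqrt{|\mathcal{S}|}\,C/m$ matches the add/remove-one-chunk convention. Under replacement the triangle inequality gives a factor of 2, and the definition of neighbors in the paper says ``differing in a single record.'' I would state the convention explicitly. Under add/remove, the chunk count $m$ also changes, and a mean with varying denominator is delicate. I would handle this in one of two ways. The first is to fix $m$ and treat removal as replacing $\tilde a_j$ with $0$, which gives exactly $\sqrt{|\mathcal{S}|}\,C/m$. The second is to note that under replacement the bound doubles and $\sigma$ is rescaled accordingly. Either way, summing over the $|\mathcal{S}|$ layers and taking square roots gives $\Delta_2 = \sqrt{|\mathcal{S}|}\,C/m$ up to this convention.

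\textbf{Comparison with per-head clipping and privacy.} Under per-head clipping each of the $|\mathcal{S}|H$ head blocks is clipped to norm $C$. The same block-orthogonality argument then gives $\sqrt{|\mathcal{S}| H}\,C/m$, which is exactly a factor $\sqrt{H}$ larger; for $H=32$ this is $\sqrt{32}\approx 5.66$. The analytic Gaussian mechanism's noise scale $\sigma$ is linear in $\Delta_2$ for fixed $(\varepsilon_{\text{tv}},\delta)$, so $\sigma$ shrinks by the same factor $\sqrt{H}$. By the Gaussian mechanism (analytic version, \citealp{balle2018improving}), releasing $\bar a + \mathcal{N}(0,\sigma^2 I)$ is then $(\varepsilon_{\text{tv}},\delta)$-DP. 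This comparison holds at equal clipping threshold $C$; if the per-head threshold were rescaled, for example to $C/\sqrt{H}$, the gap would vanish.
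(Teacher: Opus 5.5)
Your proof is correct and follows the same route as the paper's: disjointness isolates the single changed chunk, per-layer clipping plus the layerwise splitting of the squared norm gives the $\sqrt{|\mathcal{S}|}\,C$ bound, dividing by $m$ gives $\Delta_2$, and the Gaussian mechanism finishes. Your factor-of-2 caveat is well founded and exposes a slip in the paper's own proof. That proof adopts replacement adjacency (``neighboring datasets differ in one example's data within a single chunk'') yet bounds $\|\tilde a_i\|_2$ rather than $\|\tilde a_j-\tilde a_j'\|_2$, so its constant $\sqrt{|\mathcal{S}|}\,C/m$ holds only under your add/remove-one-chunk, fixed-$m$ convention and should be $2\sqrt{|\mathcal{S}|}\,C/m$ under replacement. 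The paper also never derives the $\sqrt{H}$ comparison; your block argument supplies it, with the correct equal-$C$ qualification. Finally, the paper's classical $\sqrt{2\ln(1.25/\delta)}$ calibration requires $\varepsilon_{\text{tv}}<1$, whereas your appeal to the analytic Gaussian mechanism covers the larger budgets used in the experiments.
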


\begin{theorem}[Public-Data Variant]
\label{thm:public_variant}
When head selection runs entirely on public data $\mathcal{D}_{\text{pub}}$, the complete system releasing $(\bar{a}^{\text{priv}}, \mathbf{m})$ satisfies $(\varepsilon_{\text{tv}}, \delta)$-differential privacy with respect to $\mathcal{D}_{\text{priv}}$.
\end{theorem}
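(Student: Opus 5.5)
The plan is to prove Theorem~\ref{thm:public_variant} by writing the full release as a post-processing of the Gaussian mechanism analysed in Theorem~\ref{thm:task_vector_privacy}. The only fact we need about $\mathbf{m}$ is that its computation never touches $\mathcal{D}_{\text{priv}}$.

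\textbf{Step 1: write the release as a two-part mechanism.} Define the overall mechanism as
\[
\mathcal{M}(\mathcal{D}_{\text{priv}}) = \bigl(\mathcal{M}_{\text{tv}}(\mathcal{D}_{\text{priv}}), \mathcal{M}_{\text{sel}}(\mathcal{D}_{\text{pub}})\bigr).
\]
Here $\mathcal{M}_{\text{tv}}$ is Algorithm~\ref{alg:private_activations}. $\mathcal{M}_{\text{sel}}$ is REINFORCE head selection run on public data with the model $F$ and its own internal randomness, which is independent of the noise inside $\mathcal{M}_{\text{tv}}$.

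We treat $\mathcal{D}_{\text{pub}}$ and $F$ as fixed and identical across neighbouring private datasets. Under that assumption, $\mathcal{M}_{\text{sel}}$ is a randomized map whose output distribution is the same for every $\mathcal{D}_{\text{priv}}$. Viewed as a function of the private data, it is $(0,0)$-DP.

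\textbf{Step 2: privacy of the task vector.} Theorem~\ref{thm:task_vector_privacy} gives $\ell_2$-sensitivity $\Delta_2 = \sqrt{|\mathcal{S}|}\,C/m$. I would re-verify this in two parts:
\begin{itemize}
\item Under disjoint partitioning, replacing one record changes exactly one chunk $\mathcal{C}_i$, so only $\tilde a_i$ changes.
\item Per-layer clipping bounds each layer block of $\tilde a_i$ by $C$. The difference $\tilde a_i - \tilde a_i'$ therefore has norm at most $2C$ per layer, or at most $C$ under the paper's convention for this neighbouring notion. Combining across layers in $\ell_2$ gives the factor $\sqrt{|\mathcal{S}|}$, and the factor $1/m$ comes from averaging.
\end{itemize}
Since $\sigma$ is chosen by the analytic Gaussian mechanism of \citet{balle2018improving} for this $\Delta_2$, $\varepsilon_{\text{tv}}$ and $\delta$, the output $\bar a^{\text{priv}}$ is $(\varepsilon_{\text{tv}},\delta)$-DP.

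\textbf{Step 3: combine the two parts.} Because $\mathbf{m}$ is independent of the private data, the pair can be written as a post-processing map
\[
g(\bar a^{\text{priv}}) = (\bar a^{\text{priv}}, \mathbf{m}),
\]
where $g$ uses randomness independent of $\mathcal{D}_{\text{priv}}$. By the post-processing property, which also holds for randomized maps, the pair is $(\varepsilon_{\text{tv}},\delta)$-DP. An equivalent route is to apply the composition theorem with $(\varepsilon_{\text{tv}},\delta)$ and $(0,0)$. If the selection is allowed to read the released noisy vector $\bar a^{\text{priv}}$, that is still post-processing. The inference phase is further post-processing, so it adds nothing to the budget.

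\textbf{Main obstacle.} The delicate step is Step 2, specifically the sensitivity bookkeeping. Two points need care:
\begin{itemize}
\item Replacing one record could change the partition itself, for instance the chunk count $m$, or shift chunk boundaries so that more than one chunk changes. I would fix the partition through a data-independent assignment with $m$ fixed, so that any neighbouring pair differs in exactly one chunk.
\item The constant $C$ versus $2C$ must match the neighbouring relation in Definition~\ref{def:dp}.
\end{itemize}
Once $\Delta_2$ is correct, the rest is immediate.
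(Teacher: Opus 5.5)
Your proposal is correct and follows essentially the same route as the paper: head selection on $\mathcal{D}_{\text{pub}}$ contributes $(0,0)$, the pair $(\bar a^{\text{priv}}, \mathbf{m})$ inherits the $(\varepsilon_{\text{tv}},\delta)$ guarantee of Theorem~\ref{thm:task_vector_privacy}, and inference is further post-processing. Your extra care is warranted on two points. First, treating selection that reads $\bar a^{\text{priv}}$ as post-processing covers a case the paper's one-line argument glosses over. Second, your $C$ versus $2C$ flag is real: under the paper's replace-one-example neighbouring relation the per-chunk difference can reach $2\sqrt{|\mathcal{S}|}\,C$, but that affects Theorem~\ref{thm:task_vector_privacy}, not this reduction.
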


Head selection on public data incurs zero privacy cost. Inference applies a deterministic function to the released artifacts; by the post-processing property, this preserves the $(\varepsilon_{\text{tv}}, \delta)$ guarantee for unlimited queries.

\begin{theorem}[Private-Only Variant]
\label{thm:private_variant}
When head selection uses the Gumbel mechanism over $\bar{k}$ candidates with loss clipping threshold $C_{\text{sel}}$, the complete system satisfies $(\varepsilon_{\text{tv}} + \varepsilon_{\text{sel}}, \delta)$-differential privacy.
\end{theorem}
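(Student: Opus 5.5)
The proof will be a sequential composition argument with three components. The first is the mean-activation release of Theorem~\ref{thm:task_vector_privacy}. The second is the Gumbel-based mask selection. The third is the inference map, handled by post-processing.

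\textbf{Step 1: the task vector.} By Theorem~\ref{thm:task_vector_privacy}, the $\ell_2$-sensitivity of $\bar{a}$ is $\Delta_2 = \sqrt{|\mathcal{S}|}\,C/m$. The analytic Gaussian mechanism~\cite{balle2018improving}, run with $\sigma$ calibrated to $(\Delta_2,\varepsilon_{\text{tv}},\delta)$, therefore makes the release of $\bar{a}^{\text{priv}}$ $(\varepsilon_{\text{tv}},\delta)$-DP.

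\textbf{Step 2: mask selection.} First I fix the candidate set of $\bar{k}$ masks. I will argue that it is produced either from the REINFORCE distribution trained without touching the private validation data, or as a function of $\bar{a}^{\text{priv}}$ and public/model randomness only, so that it is post-processing of Step 1. Conditioned on the candidates, each score is
\[
s(\mathbf{m}_j) = \sum_{x\in\mathcal{D}_{\text{val}}} \min\bigl(\max(\ell(x;\mathbf{m}_j,\bar{a}^{\text{priv}}),0),C_{\text{sel}}\bigr).
\]
Changing one private validation record alters one summand, so each score moves by at most $C_{\text{sel}}$; the sensitivity is $C_{\text{sel}}$. Adding $\mathrm{Gumbel}(C_{\text{sel}}/\varepsilon_{\text{sel}})$ noise to each score and returning the arg-max is exactly the exponential mechanism with utility sensitivity $C_{\text{sel}}$. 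The Gumbel-max trick of~\citet{durfee2019practical} gives this equivalence. A careful accounting of the exponential mechanism yields $\varepsilon_{\text{sel}}$-DP (pure DP) for the selected index. For this it suffices that the scores are monotone in the dataset, as for a sum of nonnegative clipped losses under add/remove neighbors. Otherwise I will rescale the noise by a factor of $2$, or absorb that factor into the definition of $\varepsilon_{\text{sel}}$. The selection mechanism takes $\bar{a}^{\text{priv}}$ as an input, so it is adaptive. Since this is the $\varepsilon_{\text{sel}}$ guarantee for every fixed value of the previous output, adaptive composition applies.

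\textbf{Step 3: composition and post-processing.} The basic (adaptive) composition theorem from the preliminaries combines $(\varepsilon_{\text{tv}},\delta)$ and $(\varepsilon_{\text{sel}},0)$. The joint release of $(\bar{a}^{\text{priv}},\mathbf{m})$ is therefore $(\varepsilon_{\text{tv}}+\varepsilon_{\text{sel}},\delta)$-DP. Inference for arbitrarily many queries is a data-independent function of these artifacts and the public query. By the post-processing property, the whole system keeps the same guarantee.

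\textbf{Main obstacle.} I expect the delicate point to be how the private data is used across phases. If the same records feed both the chunks and the validation scores, a single record can influence both mechanisms. Composition still handles this, because the two mechanisms are simply applied to the same dataset. The real risk is REINFORCE itself: if the Bernoulli policy that produces the candidates is trained on private data without noise, the candidate set leaks information. I would first try to argue that the candidates depend only on public or model-internal signals, such as being trained against $\bar{a}^{\text{priv}}$ alone. Failing that, I would state explicitly that the candidates are generated from a disjoint split, or assumed data-independent, as a hypothesis of the theorem.
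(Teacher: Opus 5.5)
Your Steps 1--3 are the paper's proof: Theorem~\ref{thm:task_vector_privacy} for $\bar{a}^{\text{priv}}$, loss clipping that gives each score sensitivity $C_{\text{sel}}$, the Gumbel mechanism of \citet{durfee2019practical} as $(\varepsilon_{\text{sel}},0)$-DP, basic composition, and post-processing. Your explicit appeal to \emph{adaptive} composition (selection reads $\bar{a}^{\text{priv}}$) is a correction the paper needs, since it states only the non-adaptive version.

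The obstacle in your last paragraph is a genuine hole, and the paper's proof passes over it. The proof just samples the $\bar{k}$ candidates from the REINFORCE policy, while Algorithm~\ref{alg:private_head_selection} trains that policy on $\mathcal{D}_{\text{priv}}$ with no noise, so nothing is charged for it. The cited limited-domain analysis does not cover this. It tolerates a data-dependent domain only when that domain is the top $\bar{k}$ under the same bounded-sensitivity scores, and even then it costs a $\delta$ and a $\perp$ threshold.

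So the statement does not follow for the algorithm as written, and an extra hypothesis is unavoidable. Only one form of it works: the candidate set may depend on $\mathcal{D}_{\text{priv}}$ at most through $\bar{a}^{\text{priv}}$ (e.g.\ a fixed list chosen in advance, or REINFORCE run on public data). The alternative is to make REINFORCE itself DP and add its budget to the total. Two of your other options fail:
\begin{itemize}
\item Keeping REINFORCE off the validation examples, or training it on another disjoint split of $\mathcal{D}_{\text{priv}}$, does not help. Those records still reach the released mask through an un-noised computation. Parallel composition yields nothing when one part is not DP at all, unless you declare that split public.
\item Training ``against $\bar{a}^{\text{priv}}$ alone'' does not describe the paper's REINFORCE, whose reward is task loss on labelled examples.
\end{itemize}

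On the factor of $2$ you are again more careful than the paper, but the monotone case is not the one at hand. The paper's neighbouring relation is replacement of an example (proof of Theorem~\ref{thm:task_vector_privacy}). Under replacement, one candidate's clipped-loss sum can rise by $C_{\text{sel}}$ while another's falls by $C_{\text{sel}}$. Noise of scale $C_{\text{sel}}/\varepsilon_{\text{sel}}$, as the paper specifies, then justifies only $2\varepsilon_{\text{sel}}$, and the stated total requires your doubled-noise fallback. If you switch to add/remove to regain monotonicity, you must recheck Step~1 under that relation. Under either relation, one changed example can move its chunk's clipped tensor by up to $2\sqrt{|\mathcal{S}|}\,C$, not $\sqrt{|\mathcal{S}|}\,C$.

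Finally, the Gumbel-max identity behind your exponential-mechanism step is for $\arg\max_j(u_j+g_j)$, with the noise on the utility. To prefer low loss, take $u_j=-s_j$. With standard Gumbel noise, the $\arg\min_j(s_j+g_j)$ that the paper's algorithm and proof actually use coincides with the exponential mechanism for $\bar{k}=2$, but not in general. When one score sits far above the rest, raising it by $C_{\text{sel}}$ changes its selection probability by a factor approaching $e^{(\bar{k}-1)\varepsilon_{\text{sel}}}$.
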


Mean activations satisfy $(\varepsilon_{\text{tv}}, \delta)$-DP by Theorem~\ref{thm:task_vector_privacy}. The Gumbel mechanism provides $(\varepsilon_{\text{sel}}, 0)$-DP for selecting from the limited domain~\citep{durfee2019practical}. Basic composition yields the total guarantee. Inference remains post-processing.

\section{Experiments \& Results}


\subsection{Experimental Setup}
\label{sec:setup}

\begin{figure*}[t]
\centering
\begin{subfigure}[b]{\textwidth}
    \centering
    \includegraphics[width=0.95\textwidth]{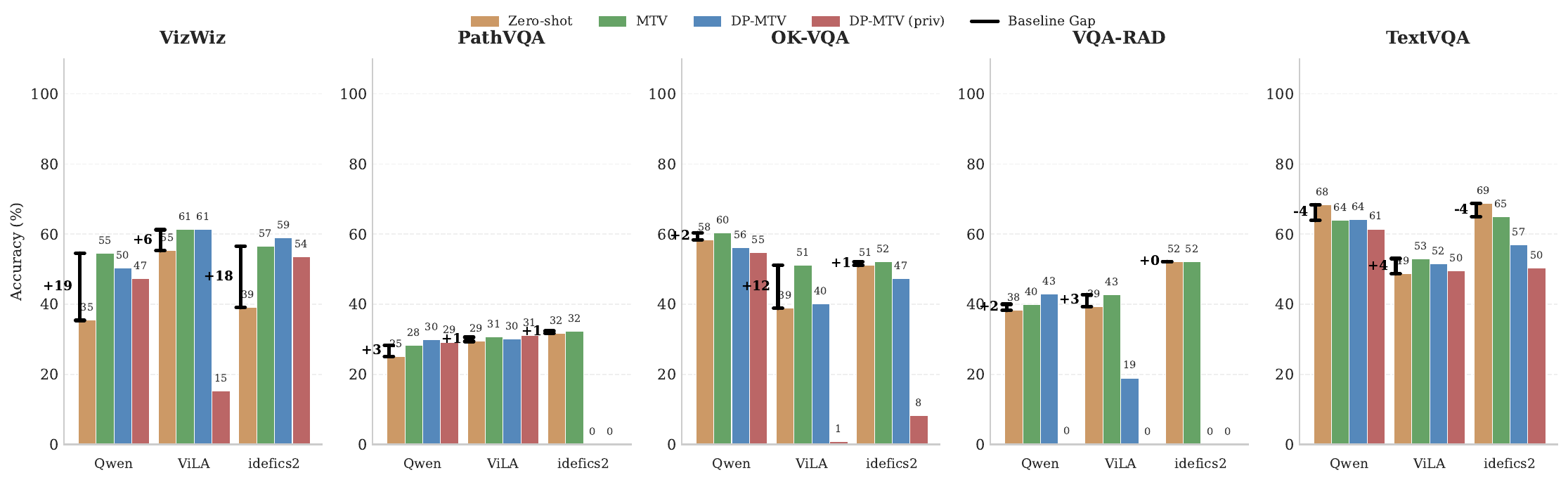}
    \caption{VQA Tasks}
    \label{fig:fig_vqa_model_comparison}
\end{subfigure}

\vspace{0.3cm}

\begin{subfigure}[b]{\textwidth}
    \centering
    \includegraphics[width=0.5\textwidth]{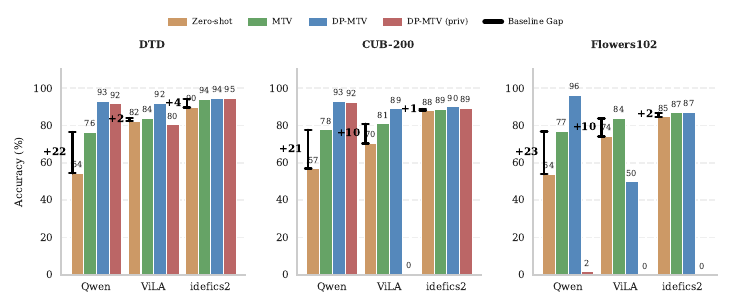}
    \caption{Classification Tasks}
    \label{fig:fig_classification_model_comparison}
\end{subfigure}
\caption{Model comparison at $\varepsilon=1.0$. Brackets show the \emph{baseline gap} (MTV $-$ Zero-shot); larger gaps predict better DP-MTV performance. (a)~VizWiz has the largest gaps and strongest DP-MTV results. (b)~On 2-way classification, DP-MTV often matches or exceeds MTV.}
\label{fig:model_comparison}
\end{figure*}


\begin{figure*}[t]
\centering
\begin{subfigure}[b]{\textwidth}
    \centering
    \includegraphics[width=0.85\textwidth]{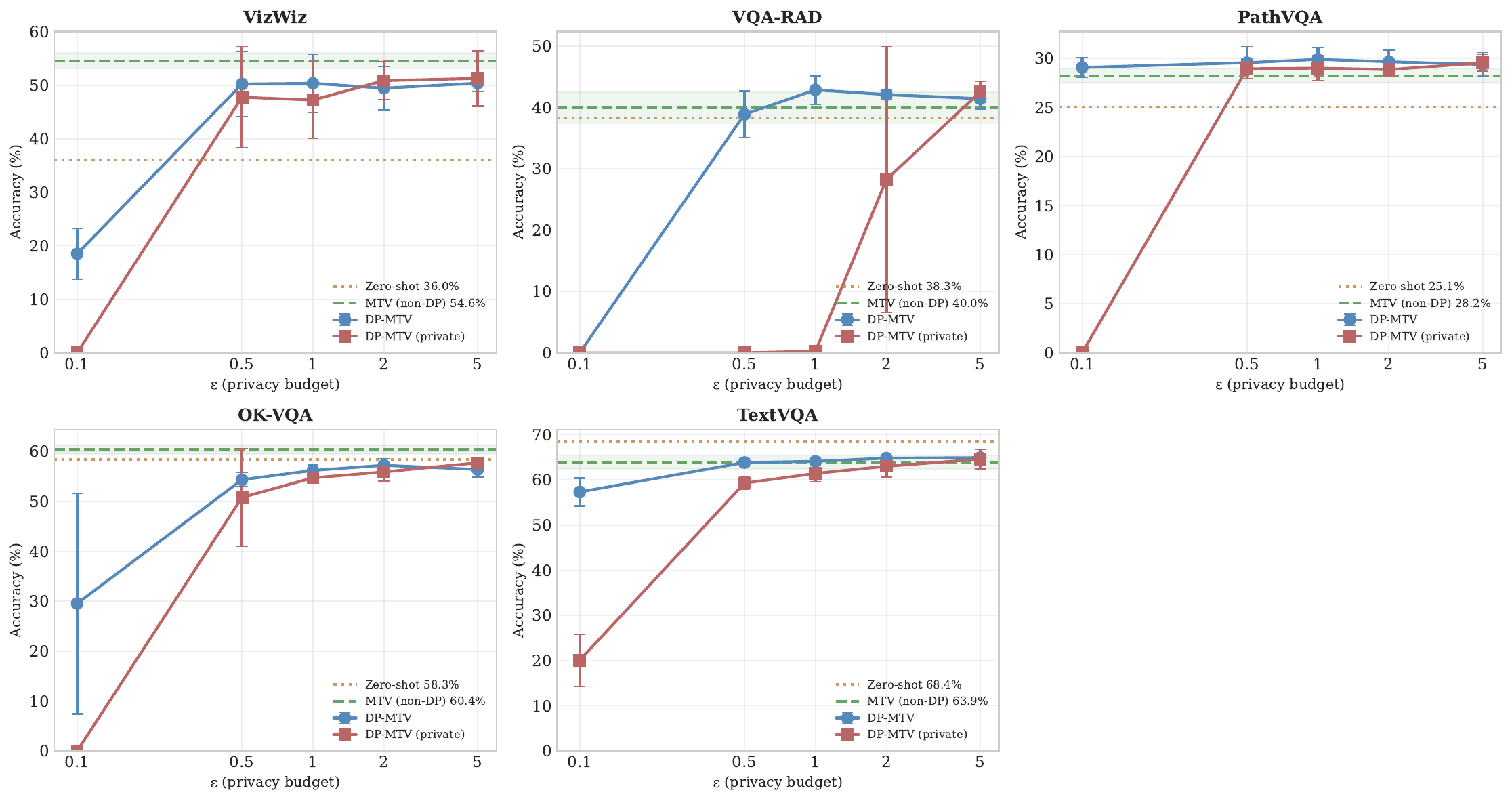}
    \caption{VQA Tasks}
    \label{fig:privacy_utility_vqa}
\end{subfigure}

\vspace{0.3cm}

\begin{subfigure}[b]{\textwidth}
    \centering
    \includegraphics[width=0.85\textwidth]{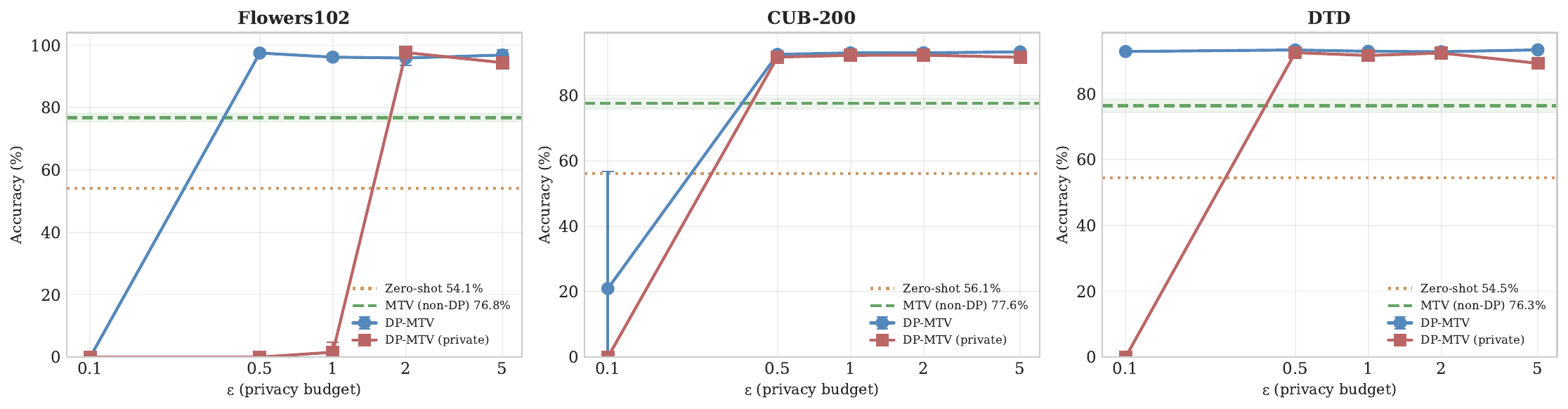}
    \caption{Classification Tasks}
    \label{fig:privacy_utility_classification}
\end{subfigure}
\caption{Privacy-utility tradeoffs for Qwen-VL across privacy budgets. 
Dashed lines: non-private MTV; dotted lines: zero-shot. 
(a)~On VQA, DP-MTV performs best on datasets where MTV provides the largest gains over zero-shot (e.g., VizWiz); performance varies by architecture (Figure~\ref{fig:model_comparison}). (b)~On classification, DP-MTV matches or exceeds MTV at practical privacy budgets.}
\label{fig:privacy_utility}
\end{figure*}

\paragraph{Datasets.} We evaluate on eight vision-language benchmarks (see Table~\ref{tab:datasets} in Appendix~\ref{app:datasets}). For VQA, we use VizWiz~\citep{gurari2018vizwiz} (visual questions from blind users), VQA-RAD~\citep{lau2018dataset} and PathVQA~\citep{he2020pathvqa} (medical VQA on radiology and pathology images), OK-VQA~\citep{marino2019visual} (knowledge-intensive questions), and TextVQA~\citep{singh2019towards} (questions requiring reading text in images). For classification, we use Flowers102~\citep{nilsback2008automated}, CUB-200~\citep{wah2011caltech}, and DTD~\citep{cimpoi2014describing} for fine-grained visual recognition. Following MTV, classification tasks are formulated as 2-way 1-shot problems: given one example each of classes A and B, the model predicts the query image's class.


\paragraph{Models.} We evaluate three vision-language models with different architectural foundations. \textbf{Qwen-VL}~\citep{bai2023qwen} is a LLaMA-based VLM pretrained on interleaved image-text data. \textbf{LLaMA3-ViLA-1.5-8B}~\citep{lin2024vila} (abbreviated as ViLA-1.5) uses LLaMA-3 as its language backbone. \textbf{Idefics2-8B}~\citep{laurenccon2024matters} is a Mistral-based model pretrained on web-scraped multimodal documents. All three models are designed for interleaved multimodal ICL, making them suitable testbeds for evaluating DP-MTV.

\paragraph{Baselines.} We compare against three baselines. \textbf{0-Shot} performs zero-shot inference without in-context examples. \textbf{Clean MTV} applies standard MTV without privacy constraints, establishing an upper bound. The gap between these baselines indicates the headroom available for DP-MTV. We evaluate two DP-MTV variants: \textbf{DP-MTV (Public)} uses examples from the validation set for head selection at zero privacy cost, while \textbf{DP-MTV (Private)} privatizes head selection using only the training data.

\paragraph{Evaluation.} We measure accuracy using the VQA evaluation metric~\citep{antol2015vqa} for VQA tasks and exact match for classification.All results are averaged over 3 random seeds. Privacy guarantees are reported as $(\varepsilon, \delta)$-differential privacy with $\delta=10^{-5}$.

\paragraph{Hyperparameters.} We extract activations from all 32 layers ($|\mathcal{S}|=32$) and partition each training set into $m=100$ disjoint chunks with clipping threshold $C=1.0$. For VQA datasets, each chunk contains one target plus $K=8$ demonstrations (9 examples total, using 900 examples per dataset). For classification, each chunk is a single 2-way example with $K=2$ support images (100 examples total). This represents 2-50\% of each training set depending on size. We evaluate privacy budgets $\varepsilon \in \{0.1, 0.5, 1, 2, 5\}$. For private head selection, we sample $\bar{k}=12$ candidate masks from the learned distribution, evaluate each on $B=100$ held-out validation examples, and use clipping threshold $C_{\text{sel}}=1.0$ for loss aggregation.

\subsection{ Results}

We evaluate DP-MTV on five VQA benchmarks and three classification datasets using three VLM architectures. Our experiments address three questions: \textit{ (1)~Does DP-MTV enable effective private in-context learning? (2)~When does DP-MTV work well? (3)~Does DP-MTV generalize across model architectures?} Figure~\ref{fig:privacy_utility} shows privacy-utility tradeoffs across the full range of privacy budgets.

\paragraph{Privacy-Utility Tradeoffs.}
Figure~\ref{fig:privacy_utility}a shows that DP-MTV enables effective private in-context learning on VQA tasks. On VizWiz, the public variant achieves 50.4\% at $\varepsilon=1.0$, retaining 92\% of MTV's performance (54.6\%). Performance improves further at higher privacy budgets, approaching MTV at $\varepsilon=5.0$. Other VQA datasets show varying degrees of success, which we analyze below. Figure~\ref{fig:privacy_utility}b shows classification results, where DP-MTV matches or exceeds MTV at moderate privacy budgets ($\varepsilon \geq 0.5$).

\paragraph{The Role of Baseline Gaps.}
For VQA tasks, a clear pattern emerges: DP-MTV performs best when MTV substantially improves over zero-shot. We call this difference the \emph{baseline gap}. When the gap is large, task vectors encode meaningful information that DP-MTV can preserve despite privacy noise. VizWiz exemplifies this with Qwen-VL's +19.2\% gap, providing headroom for DP-MTV to recover substantial gains. When gaps are small—VQA-RAD (+1.7\%), OK-VQA (+2.0\%) for Qwen-VL—there is limited signal to preserve, and DP-MTV shows modest differences from MTV. TextVQA presents an edge case where Qwen-VL's MTV underperforms zero-shot ($-4.5\%$ gap), indicating task vectors do not benefit this model-task combination; DP-MTV performs comparably to MTV, as expected.

Classification tasks exhibit different behavior. Despite large baseline gaps (+23--25\% for Qwen-VL), DP-MTV substantially exceeds MTV (e.g., Flowers102: 96.2\% vs 76.8\%). This reflects weak MTV baselines on 2-way classification tasks rather than exceptional DP-MTV performance, consistent with prior observations that
off-the-shelf VLMs underperform on classification~\citep{zhang2024visually}.

\paragraph{Model Architecture Effects.}
Figure~\ref{fig:model_comparison} reveals that baseline gaps and DP-MTV performance vary substantially across architectures. ViLA achieves larger gaps than Qwen-VL on knowledge-intensive tasks: +12.3\% vs +2.0\% on OK-VQA, and +4.3\% vs $-4.5\%$ on TextVQA, likely due to its LLaMA-3 foundation providing stronger world knowledge. Idefics2 shows mixed results: the public variant exceeds MTV on VizWiz (59.0\% vs 56.5\%) and 
classification, but fails on medical VQA (PathVQA, VQA-RAD), suggesting sensitivity to domain-specific tasks. Qwen-VL provides the most consistent results across datasets, while ViLA's private variant 
requires higher $\varepsilon$ for stable performance on some datasets. These differences suggest practitioners should consider both task requirements and architecture characteristics when deploying DP-MTV.

In some cases across both VQA and classification, DP-MTV matches or slightly exceeds non-private MTV. We hypothesize this may stem from known connections between differential privacy and robust statistics~\citep{dwork2009differential}: clipping can mitigate massive outliers in foundation model activations~\citep{diao2023outliers, chen2020understanding}, and Gaussian noise can act as randomized smoothing~\citep{bishop1995training, cohen2019certified}.


\paragraph{Deployment Recommendations.}
Based on our results, we offer two practical recommendations. First, the public-data variant is preferred when related auxiliary data exists: it achieves comparable or better accuracy while concentrating the full privacy budget on mean activations. Second, DP-MTV is most effective when MTV provides meaningful improvement over zero-shot—practitioners can estimate this gap on non-private data before committing to private deployment. The private-only variant enables fully private operation but typically requires $\varepsilon \geq 1.0$ for stable performance.

\subsection{Ablation Studies}
\paragraph{Clipping Threshold.}
Figure~\ref{fig:ablation_clip} shows accuracy as a function of clipping threshold $C$ on VizWiz with Qwen-VL at $\varepsilon=1.0$. Lower thresholds 
reduce sensitivity but increase bias from aggressive clipping; higher thresholds preserve more signal but require proportionally more noise. 
We find $C=1.0$ provides a favorable tradeoff, achieving 50.4\% accuracy. 
Performance degrades gradually at higher thresholds, confirming the method is not brittle to this hyperparameter choice.

\paragraph{Robustness to Hyperparameters.}
We vary the number of chunks $m \in \{50, 100, 500, 1000\}$ and demonstration shots $K \in \{2, 8, 16\}$. Performance remains stable across both ranges (Table~\ref{tab:ablation_robustness} in Appendix), indicating the method is not sensitive to these choices.

\begin{figure}[t]
\centering
\includegraphics[width=0.9\columnwidth]{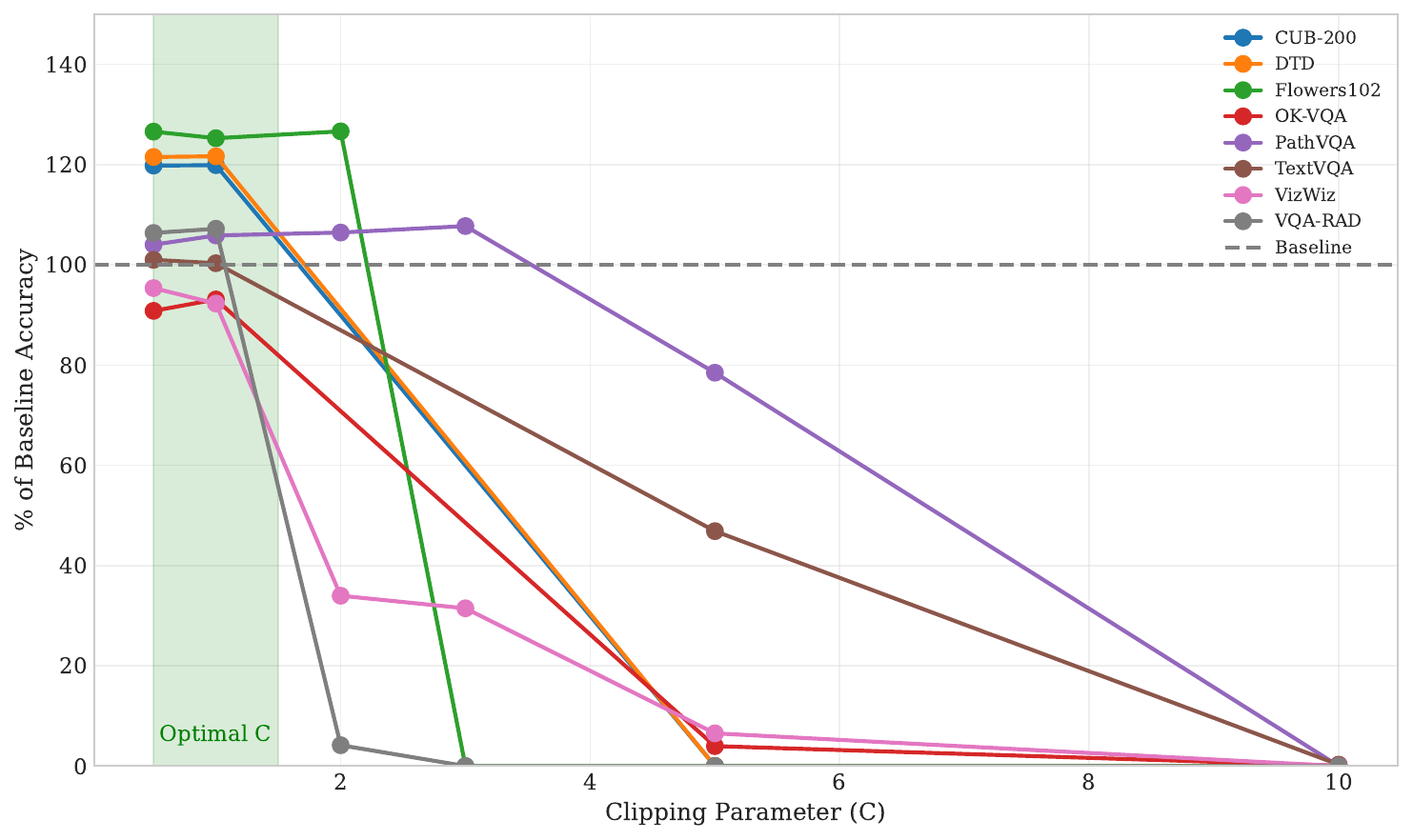}
\caption{Effect of clipping threshold $C$ on VizWiz accuracy (Qwen-VL, 
$\varepsilon=1.0$). $C=1.0$ balances signal preservation with noise 
calibration. Performance degrades gradually at higher thresholds.}
\label{fig:ablation_clip}
\end{figure}



\section{Related Work}



\subsection{Privacy Risks in Vision-Language Models}
As Vision-Language Models (VLMs) become ubiquitous, their susceptibility to privacy leakage has come under scrutiny. Adversarial attacks such as Membership Inference Attacks (MIA) can determine whether specific data points were used during training or in-context learning by analyzing model confidence scores \cite{wen2024membership, duan2024privacy}. In the multimodal domain, these risks are amplified; recent work has shown that VLMs can infer sensitive attributes (e.g., health status, location, demographics) from seemingly benign background details in images \cite{mendes2024granular, tomekce2024private, caldarella2024phantom}. Furthermore, prompt injection attacks via visual inputs can bypass textual safety guardrails, causing models to regurgitate private context data \cite{liu2024ocrbench, hou2025neighborhood}. These vulnerabilities necessitate rigorous privacy guarantees for any system processing sensitive user data.

\subsection{Differentially Private In-Context Learning}
To mitigate these risks, several frameworks have introduced Differential Privacy (DP) to In-Context Learning (ICL). Early approaches focused on text-only models, utilizing methods such as private synthetic demonstration generation \cite{panda2023dpicl, tang2023privacy} or ensemble-based aggregation like PATE \cite{wu2023privacypreserving}. More recent works have explored Local Differential Privacy (LDP) applied to labels \cite{zheng2024locally} or leveraging public auxiliary data to reduce privacy costs \cite{joo2025public}.

However, existing DP-ICL methods face a critical scalability barrier in the multimodal setting. Standard DP mechanisms account for privacy loss based on the information accessed; since a single image corresponds to hundreds of visual tokens \cite{bai2023qwenvl}, protecting multimodal contexts token-by-token leads to rapid privacy budget exhaustion. Our work addresses this by shifting the privacy mechanism from the token space to the activation space, enabling the aggregation of many-shot multimodal demonstrations without prohibitive privacy costs.

\subsection{Multimodal Task Vectors}
Multimodal Task Vectors (MTV) \cite{huang2024multimodal} represent a paradigm shift from token-based to activation-based ICL. By computing the mean activation of attention heads across hundreds of demonstrations, MTV compresses task-specific knowledge into a single steering vector that can be injected at inference time. This approach bypasses the context window limitations of standard ICL, enabling ``Many-Shot" learning. However, the original MTV framework lacks privacy guarantees; the task vectors are direct aggregates of raw private data, making them vulnerable to reconstruction or membership inference. \textsc{DP-MTV} bridges this gap by formalizing the aggregation process under $(\epsilon, \delta)$-DP.

\subsection{Our Contribution: Bridging the Gap}
Our work bridges the gap between scalable multimodal learning and rigorous privacy protection. While prior DP-ICL methods are constrained by context length and token costs, and standard MTV lacks privacy, \textsc{DP-MTV} introduces the first framework for \textit{Differentially Private Many-Shot Multimodal ICL}. By shifting the privacy mechanism from the token space to the activation space, we enable the aggregation of hundreds of demonstrations with a constant privacy cost. Furthermore, we provide empirical evidence that the privacy mechanisms themselves (clipping and noise) can serve as effective regularizers, enhancing performance on high-variance classification tasks.

\section{Conclusion}

We introduced DP-MTV, the first framework for differentially private  many-shot multimodal in-context learning. By operating in activation  space, DP-MTV aggregates patterns from hundreds of demonstrations and  privatizes the aggregate, incurring all privacy cost during construction and enabling unlimited inference queries thereafter.

Our experiments across eight benchmarks and three VLM architectures  show that formal $(\varepsilon, \delta)$-differential privacy is  achievable without sacrificing the benefit of learning from many  examples. At $\varepsilon=1.0$, DP-MTV preserves most of the gain from in-context learning compared to zero-shot baselines.

Performance varies across tasks, with stronger results on classification than open-ended VQA. Future work could explore tighter  composition for the fully-private setting, adaptive clipping strategies, or extensions to other activation editing methods.

\section*{Impact Statement}

This work enables vision-language models to learn from sensitive multimodal data with formal privacy guarantees. The primary benefit is allowing organizations in healthcare, finance, and legal domains to use many-shot in-context learning without exposing individual data to inference attacks.

Differential privacy provides guarantees for individuals but does not prevent inferences about groups or populations represented in the data. DP-MTV also inherits any biases present in the underlying VLM.

Our evaluation includes medical imaging datasets (VQA-RAD, PathVQA) and VizWiz, which contains images from blind and low-vision users. These represent real applications where privacy protection is essential.

\bibliography{ref}
\bibliographystyle{icml2026}

\newpage
\appendix
\onecolumn

\section{Experimental Details}
\label{app:exp_details}

\subsection{Dataset Statistics}
\label{app:datasets}

Table~\ref{tab:datasets} provides complete statistics for all datasets used in our experiments.

\begin{table}[h]
\centering
\caption{Dataset Statistics. All datasets use standard train/test splits from original papers.}
\label{tab:datasets}
\begin{tabular}{lcccc}
\toprule
Dataset & Domain & Train & Test & Task \\
\midrule
VizWiz & General VQA & 20,523 & 8,000 & Open-ended \\
VQA-RAD & Medical VQA & 1,793 & 451 & Open-ended \\
PathVQA & Medical VQA & 19,654 & 6,719 & Open-ended \\
OK-VQA & Knowledge VQA & 9,009 & 5,046 & Open-ended \\
TextVQA & Text VQA & 34,602 & 5,000 & Open-ended \\
\midrule
Flowers102 & Classification & 1,020 & 6,149 & 2-way 1-shot \\
CUB-200 & Classification & 5,994 & 5,794 & 2-way 1-shot \\
DTD & Classification & 3,760 & 1,880 & 2-way 1-shot \\
\bottomrule
\end{tabular}
\end{table}

With $m=100$ chunks and $K=8$ demonstrations per chunk (VQA) or $K=2$ (classification), we use approximately 500-900 training examples per dataset for private activation computation, corresponding to 2-20\% of each training set depending on dataset size.

\subsection{Implementation Details}
\label{app:implementation}

\paragraph{Computational Resources.} All experiments run on NVIDIA V100, A100 and H100 GPUs with 30GB and 80GB memory. Mean activation computation takes 1-2 hours per dataset depending on size. REINFORCE training for head selection takes 3-5 hours for the private variant and 2-4 hours for the public variant.


\paragraph{Public Data for Head Selection.} For the public-data variant, we use validation splits from related datasets when available. For VizWiz, we use VQA v2 validation data; for medical VQA tasks (VQA-RAD, PathVQA), we use samples from each other as public data since they share similar task structure despite different imaging modalities.

\section{Additional Algorithms}
This section provides the complete algorithm for private head selection used in the private-only variant (Section~\ref{sec:head_selection}).
\begin{algorithm}[t]
\caption{Private Head Selection via Noisy Top-k with Limited Domain}
\label{alg:private_head_selection}
\SetKwComment{Comment}{$\rhd$\ }{}

\KwIn{$\mathcal{D}_{\text{priv}}$ (private dataset), $F$ (VLM), $\bar{a}^{\text{priv}}$ (private mean activations), $\mathcal{S}$ (selected layers), $\varepsilon_{\text{sel}}, \delta_{\text{sel}}$ (privacy parameters), $C_{\text{sel}}$ (loss clipping threshold)}
\KwOut{$\mathbf{m} \in \{0,1\}^{|\mathcal{S}| \times H}$ (binary mask indicating which heads to modify)}

\Comment*[l]{Phase 1: Learn Bernoulli distributions via REINFORCE}
$\theta \gets \text{REINFORCE}(\mathcal{D}_{\text{priv}}, F, \bar{a}^{\text{priv}})$ \Comment*[r]{Train policy on private data}

\vspace{0.6em}

\Comment*[l]{Phase 2: Sample candidate masks (limited domain)}
$\mathcal{M} \gets \emptyset$ \Comment*[r]{Store sampled masks}
\For{$j \gets 1$ \KwTo $2000$}{
    $\lambda_j \sim \text{Bernoulli}(\sigma(\theta))$ \Comment*[r]{Sample binary mask}
    $\mathcal{M} \gets \mathcal{M} \cup \{\lambda_j\}$
}
$\{\mathbf{m}_1, \ldots, \mathbf{m}_{\bar{k}}\} \gets \text{Top-}\bar{k}\text{-Frequent}(\mathcal{M})$ \Comment*[r]{Retain $\bar{k}=12$ most frequent}

\vspace{0.6em}

\Comment*[l]{Phase 3: Compute aggregate scores with clipping}
$\mathcal{D}_{\text{val}} \gets$ Sample $B=100$ examples from $\mathcal{D}_{\text{priv}}$ \Comment*[r]{Validation set}
\For{$i \gets 1$ \KwTo $\bar{k}$}{
    $s_i \gets 0$\;
    \For{$(x, q, a) \in \mathcal{D}_{\text{val}}$}{
        Replace activations in $F$ at heads indicated by $\mathbf{m}_i$ with $\bar{a}^{\text{priv}}$\;
        $\ell \gets \text{CrossEntropy}(F(x, q), a)$ \Comment*[r]{Per-example loss}
        $s_i \gets s_i + \min(\ell, C_{\text{sel}})$ \Comment*[r]{Clip and accumulate}
    }
}

\vspace{0.6em}

\Comment*[l]{Phase 4: Apply Gumbel mechanism for private selection}
\For{$i \gets 1$ \KwTo $\bar{k}$}{
    $g_i \sim \text{Gumbel}(0, C_{\text{sel}}/\varepsilon_{\text{sel}})$ \Comment*[r]{Sample Gumbel noise}
    $\tilde{s}_i \gets s_i + g_i$ \Comment*[r]{Add noise to score}
}

$i^* \gets \arg\min_{i \in [\bar{k}]} \tilde{s}_i$ \Comment*[r]{Select mask with lowest noisy loss}

\Return{$\mathbf{m}_{i^*}$}
\end{algorithm}

\section{Additional Experimental Results}
\label{app:additional_results}

\subsection{Robustness to Hyperparameters}
\label{app:ablations}

\begin{table}[h]
\centering
\caption{Sensitivity to hyperparameters on VizWiz (Qwen-VL, $\varepsilon=1.0$, $C=1.0$). Performance is stable across both number of shots $K$ and number of chunks $m$.}
\label{tab:ablation_robustness}
\small
\begin{tabular}{lccccc}
\toprule
& \multicolumn{4}{c}{Number of Chunks $m$} \\
\cmidrule(lr){2-5}
Method & 50 & 100 & 500 & 1000 \\
\midrule
DP-MTV & 49.8±8.6 & 49.1±10.3 & 50.6±3.9 & 48.7±5.9 \\
DP-MTV (priv) & 42.3±14.0 & 77.2±5.7 & 46.7±2.3 & 46.8±3.5 \\
\bottomrule
\end{tabular}

\vspace{0.5cm}

\begin{tabular}{lcccc}
\toprule
& \multicolumn{3}{c}{Number of Shots $K$} \\
\cmidrule(lr){2-4}
Method & 2 & 8 & 16 \\
\midrule
DP-MTV & 51.1±2.6 & 52.0±1.2 & 50.6±3.3 \\
DP-MTV (priv) & 48.0±1.6 & 50.4±0.6 & 50.5±1.9 \\
\bottomrule
\end{tabular}
\end{table}



\section{Privacy Analysis Proofs}
\label{app:proofs}

\begin{proof}[Proof of Theorem~\ref{thm:task_vector_privacy}]
We analyze privacy under the bounded contribution model where neighboring datasets differ in one example's data within a single chunk.

\textit{Sensitivity bound.} Each example appears in exactly one chunk. Per-layer clipping ensures $\|\tilde{a}_i^{(l)}\|_2 \leq C$ for each layer $l \in \mathcal{S}$. The full clipped tensor thus satisfies:
\[
\|\tilde{a}_i\|_2 = \sqrt{\sum_{l \in \mathcal{S}} \|\tilde{a}_i^{(l)}\|_2^2} \leq \sqrt{|\mathcal{S}| \cdot C^2} = \sqrt{|\mathcal{S}|} \cdot C
\]
Changing one example affects one chunk's contribution to the mean $\bar{a} = \frac{1}{m}\sum_i \tilde{a}_i$, yielding $\ell_2$-sensitivity $\Delta_2 = \sqrt{|\mathcal{S}|} \cdot C / m$.

\textit{Privacy guarantee.} The Gaussian mechanism with $\sigma = \Delta_2 \sqrt{2\ln(1.25/\delta)} / \varepsilon_{\text{tv}}$ satisfies $(\varepsilon_{\text{tv}}, \delta)$-DP by standard results~\citep{dwork2014algorithmic}.
\end{proof}

\begin{proof}[Proof of Theorem~\ref{thm:public_variant}]
Head selection operates entirely on $\mathcal{D}_{\text{pub}}$, which is independent of $\mathcal{D}_{\text{priv}}$, contributing $(0,0)$ to privacy cost. The released artifacts $(\bar{a}^{\text{priv}}, \mathbf{m})$ thus inherit the $(\varepsilon_{\text{tv}}, \delta)$ guarantee from Theorem~\ref{thm:task_vector_privacy}. Inference applies deterministic functions to these artifacts; by post-processing, this preserves $(\varepsilon_{\text{tv}}, \delta)$-DP for unlimited queries.
\end{proof}

\begin{proof}[Proof of Theorem~\ref{thm:private_variant}]
Mean activations satisfy $(\varepsilon_{\text{tv}}, \delta)$-DP by Theorem~\ref{thm:task_vector_privacy}.

For head selection, we first use REINFORCE to learn a Bernoulli distribution over heads, then sample $\bar{k}$ candidate masks from this distribution. For each candidate, we compute an aggregate score by evaluating it on private validation data, with per-example losses clipped to $[0, C_{\text{sel}}]$. This clipping bounds the sensitivity of each score to $C_{\text{sel}}$. We then apply the Gumbel mechanism, adding noise with scale $C_{\text{sel}}/\varepsilon_{\text{sel}}$ to each score and selecting the mask with the lowest noisy loss.

The Gumbel mechanism (equivalently, Report Noisy Max) with bounded-sensitivity scores provides $(\varepsilon_{\text{sel}}, 0)$-DP for the released mask~\citep{durfee2019practical}. By basic composition, releasing both $\bar{a}^{\text{priv}}$ and $\mathbf{m}$ satisfies $(\varepsilon_{\text{tv}} + \varepsilon_{\text{sel}}, \delta + 0) = (\varepsilon_{\text{tv}} + \varepsilon_{\text{sel}}, \delta)$-DP. Inference is post-processing.
\end{proof}

\end{document}